\documentclass{article}
\usepackage{ijcai26}

\usepackage{times}
\usepackage{soul}
\usepackage{bm}
\usepackage{url}
\usepackage[hidelinks]{hyperref}
\usepackage[utf8]{inputenc}
\usepackage[small]{caption}
\usepackage{graphicx}
\usepackage{amsmath}
\usepackage{amsthm}
\usepackage{booktabs}
\usepackage{algorithm}
\usepackage{algorithmic}
\usepackage[switch]{lineno}
\usepackage{amssymb}
\usepackage{multirow}
\usepackage{algorithm}
\usepackage{algorithmic}

\usepackage{pifont} 
\usepackage{subcaption}
\usepackage{verbatim}
\usepackage{graphicx}
\usepackage{multirow}
\usepackage{xspace}
\usepackage{amsfonts,amssymb} 
\usepackage{microtype}
\usepackage{amsmath}
\usepackage{amsthm}
\usepackage{booktabs}
\usepackage{times}
\usepackage{latexsym}
\usepackage{color}
\usepackage[dvipsnames, table]{xcolor}
\usepackage{soul}
\usepackage{verbatim}
\usepackage{multirow}
\usepackage{xspace}
\usepackage{amsfonts,amssymb} 
\usepackage{microtype}
\usepackage{times}
\usepackage{latexsym}
\usepackage{color}
\usepackage[dvipsnames, table]{xcolor}
\usepackage{soul}
\usepackage{colortbl}
\usepackage{booktabs,makecell, multirow, tabularx}
\usepackage{adjustbox}
\usepackage{xcolor}
\usepackage{listings}
\usepackage{amsmath}

\usepackage{booktabs}
\usepackage{amssymb}
\usepackage{bbding}
\usepackage{pifont}
\usepackage{wasysym}
\usepackage{utfsym}
\usepackage{fontawesome}

\usepackage{mdframed}
\usepackage{mathtools}

\definecolor{color_gray}{RGB}{229,229,229}
\definecolor{color_blue}{RGB}{252,182,165}
\definecolor{color_pink}{RGB}{255,217,178}
\definecolor{color_yellow}{RGB}{255,255,204}
\definecolor{color_blue1}{RGB}{135, 206, 235}
\cellcolor{color_blue} \cellcolor{color_pink} \cellcolor{color_yellow} \cellcolor{blue!8}

\newtheorem{theorem}{Theorem}

\title{Breaking Semantic Hegemony: Decoupling Principal and Residual Subspaces for Generalized OOD Detection}

\author{
Ningkang Peng$^1$\and
Xiaoqian Peng$^2$\and
Yuhao Zhang$^1$\and
Qianfeng Yu$^1$\and
Feng Xing$^1$\and\\
Peirong Ma$^{1,*}$\and
Xichen Yang$^{1,*}$\and
Yi Chen$^{1,*}$\and
Tingyu Lu$^{3,*}$\and
Yanhui Gu$^{1}$\thanks{Corresponding authors.}
\\
\affiliations
$^1$School of Computer and Electronic Information, Nanjing Normal University, China\\
$^2$School of Artificial Intelligence and Information Technology, Nanjing University of Chinese Medicine, China\\
$^3$Advanced Institute for Materials Research, Tohoku University, Japan\\
\emails
\{prma, xichen\_yang, cs\_chenyi\}@njnu.edu.cn,
tingyu.lu.e7@tohoku.ac.jp,
gu@njnu.edu.cn
}

\begin{document}

\maketitle
\begin{abstract}
While feature-based post-hoc methods have made significant strides in Out-of-Distribution (OOD) detection, we uncover a counter-intuitive Simplicity Paradox in existing state-of-the-art (SOTA) models: these models exhibit keen sensitivity in distinguishing semantically subtle OOD samples but suffer from severe Geometric Blindness when confronting structurally distinct yet semantically simple samples or high-frequency sensor noise. We attribute this phenomenon to Semantic Hegemony within the deep feature space and reveal its mathematical essence through the lens of Neural Collapse. Theoretical analysis demonstrates that the spectral concentration bias, induced by the high variance of the principal subspace, numerically masks the structural distribution shift signals that should be significant in the residual subspace. To address this issue, we propose D-KNN, a training-free, plug-and-play geometric decoupling framework. This method utilizes orthogonal decomposition to explicitly separate semantic components from structural residuals and introduces a dual-space calibration mechanism to reactivate the model's sensitivity to weak residual signals. Extensive experiments demonstrate that D-KNN effectively breaks Semantic Hegemony, establishing new SOTA performance on both CIFAR and ImageNet benchmarks. Notably, in resolving the Simplicity Paradox, it reduces the FPR95 from 31.3\% to 2.3\%; when addressing sensor failures such as Gaussian noise, it boosts the detection performance (AUROC) from a baseline of 79.7\% to 94.9\%.
\end{abstract}

\begin{figure}
    \centering
    \includegraphics[width=1\linewidth]{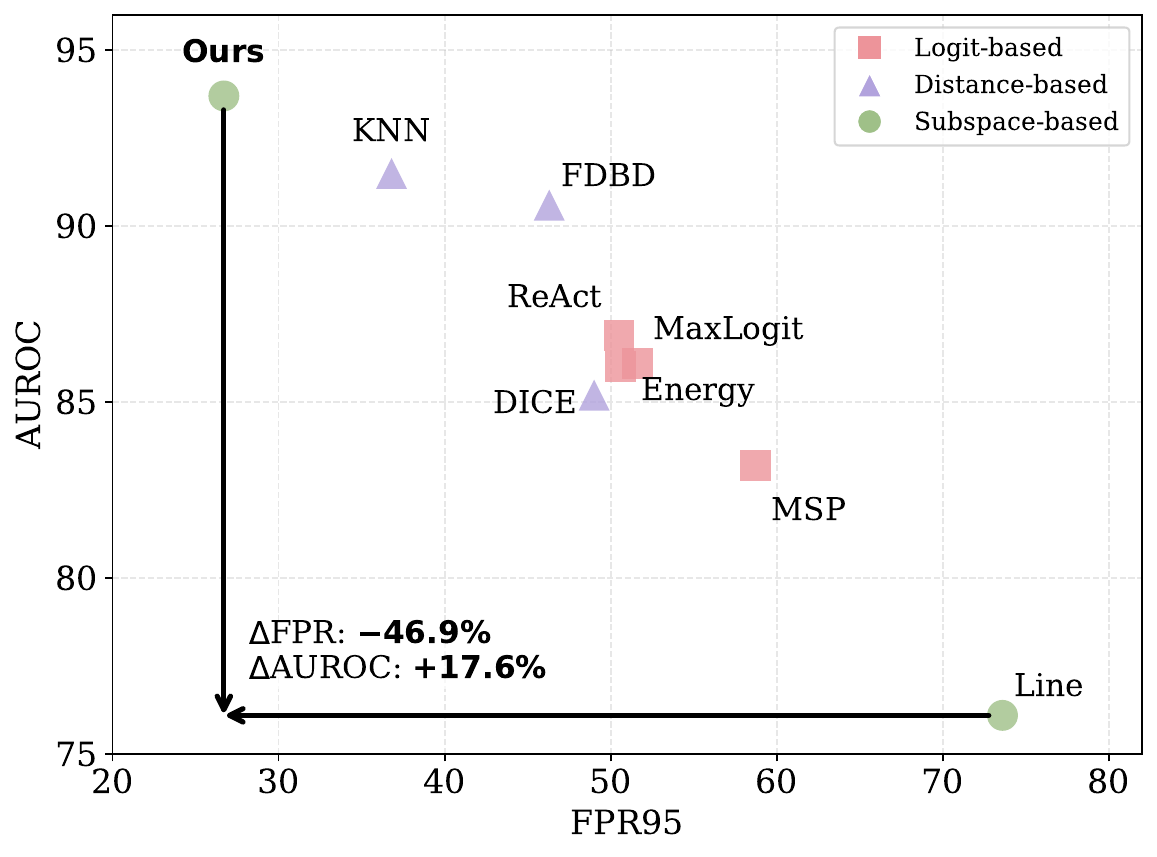}
    \caption{\textbf{Performance comparison of OOD detection methods on CIFAR-100.} We categorize existing methods into three paradigms: Logit-based (pink squares), Distance-based (purple triangles), and Subspace-based (green circles). }
    \label{fig:placeholder}
\end{figure}

\section{Introduction}

With the widespread deployment of Deep Neural Networks (DNNs) in safety-critical open-world applications, such as autonomous driving and medical diagnostics~\cite{feng2025survey,khan2025comprehensive,multiOOD}, Out-of-Distribution (OOD) detection has emerged as a paramount task~\cite{xu2025large,ijcai}. Feature-based post-hoc methods have become the mainstream benchmark, favored for their ability to leverage the powerful representation capabilities of pre-trained models without the need for expensive retraining~\cite{feature-distance-based2,wang2022vim}.

However, we reveal a pervasive yet under-explored counter-intuitive phenomenon: the \textit{Simplicity Paradox}. While existing SOTA methods excel at distinguishing semantically complex OOD samples, their detection performance deteriorates unexpectedly when encountering structurally distinct yet semantically simple samples (e.g., MNIST digits, texture patterns) or high-frequency sensor noise~\cite{winkens2020contrastive}. This defect is fatal in safety-critical domains, as it implies that models may remain oblivious to severe sensor failures, misclassifying them as known classes with high confidence.

\begin{center} \textit{\textbf{Why do simple samples trigger high-confidence Semantic Hallucinations?}} \end{center}

To explore the root cause of this paradox, we conduct a spectral analysis on deep features. We observe that, driven by the \textit{Neural Collapse} (NC) phenomenon~\cite{papyan2020prevalence}, feature energy at the terminal phase of training is heavily compressed into a low-dimensional \textit{Principal Subspace} spanned by class means, with eigenvalues exhibiting a sharp exponential decay. We define this phenomenon as \textbf{Semantic Hegemony}: principal components dominate the distance metrics, numerically masking the \textit{Residual Components} that carry crucial geometric structural information. Consequently, OOD samples that project close to In-Distribution (ID) data in the semantic space but possess large deviations in the residual space fall into the geometric blind spot of standard distance measures~\cite{sun2021react,chen2023importance}.

To address this issue, we propose a simple yet versatile framework: \textbf{Dual-Space KNN (D-KNN)}. Rather than passively accepting neural collapse, we proactively exploit this geometric property. Unlike traditional methods that calculate distances directly in the raw mixed space, D-KNN employs orthogonal decomposition to explicitly decouple features into a principal space carrying semantics and a residual space carrying domain structure~\cite{hendrycks2018deep,energy}. Through a dual-space calibration mechanism, we force the model to attend to anomalous information in the residual space, effectively pulling hidden outliers out of the blind spot during the inference phase.

The main contributions of this study are summarized as follows:
\begin{itemize}
    \item \textbf{Mechanism Discovery:} We reveal the cause of the \textit{Simplicity Paradox} from a geometric perspective, identifying \textit{Semantic Hegemony} in the feature space as the fundamental reason why existing methods fail on simple and noisy samples.
    \item \textbf{Methodological Innovation:} We propose D-KNN, a non-parametric dual-space decoupling framework. We theoretically prove that under the ideal Neural Collapse hypothesis, this method guarantees the asymptotic separability of ID and OOD samples.
    \item \textbf{Performance Verification:} Extensive experiments demonstrate that D-KNN effectively resolves the Simplicity Paradox on CIFAR and ImageNet benchmarks. Notably, it reduces the FPR95 on MNIST from 31.3\% to an astounding 2.3\%. Furthermore, D-KNN exhibits superior robustness in simulated sensor failure scenarios, establishing a new SOTA.
\end{itemize}

\section{Related Work}
\label{sec:related_work}
\subsection{Generalized Post-hoc OOD Detection}
Early works in OOD detection primarily \textbf{rely} on the model's output probabilities or logits. MSP~\cite{c:25} pioneers the use of the maximum Softmax probability as a baseline score. Subsequently, ODIN~\cite{confidence-score2} and Energy~\cite{energy} further amplify the distributional discrepancy between ID and OOD data by introducing temperature scaling and free energy functions, respectively. While these methods demonstrate strong performance on tasks with obvious semantic shifts, they inherently depend on the model's response to semantic content. Consequently, when confronting structurally distinct but semantically vacuous simple samples~\cite{nalisnick2018deep}, these methods often yield high-confidence Semantic Hallucinations due to the over-generalization of deep features, leading to detection failures.

\subsection{Distance-based and Subspace Methods}
To leverage richer geometric information within the feature space, distance-based methods have emerged. Mahalanobis~\cite{feature-distance-based} assumes that features follow a Gaussian distribution and utilizes class-conditional covariance to detect anomalies. KNN-OOD~\cite{feature-distance-based2} employs non-parametric nearest neighbor distances to directly characterize local density, thereby circumventing the limitations of distributional assumptions. In the realm of subspace analysis, ViM~\cite{wang2022vim} innovatively combines principal space logits with the residual space norm.
However, these methods typically overlook the issue of energy imbalance within the feature space. As demonstrated in our theoretical analysis, standard KNN computes distances directly in the raw space, causing faint structural cues to be masked by massive semantic variances (i.e., Semantic Hegemony). 

\subsection{Neural Collapse and Feature Geometry}
Neural Collapse~\cite{papyan2020prevalence} reveals a pervasive geometric phenomenon in deep networks during the terminal phase of training: intra-class feature variance vanishes, and feature centers collapse into a Simplex Equiangular Tight Frame (ETF)~\cite{strohmer2003grassmannian,han2021neural}. Recently, researchers have begun to explore the nexus between NC and OOD detection. For instance, Neco~\cite{yang2023neural} utilizes NC properties to constrain feature embeddings. However, existing research primarily focuses on leveraging NC to enhance classification performance; no prior work has linked the Spectral Concentration Bias induced by NC to the failure of OOD detection (the Simplicity Paradox). This study is the first to identify that while NC benefits classification (via principal space separability), the resulting drastic energy compression is the very root of the model's blindness to simple geometric structures, motivating our proposed decoupling-based remedy.

\section{Methodology}

\subsection{Motivation and Preliminaries }
Our approach is motivated by the geometric properties of deep features in the Terminal Phase of Training (TPT)~\cite{papyan2020prevalence} and the inherent limitations of standard distance-based metrics.

\noindent\textbf{Neural Collapse (NC).}
Recent theoretical works reveal that models trained with Cross-Entropy loss converge to a rigid geometric structure known as Neural Collapse. Specifically, the within-class covariance $\boldsymbol{\Sigma}_W$ vanishes, and the class means $\{\boldsymbol{\mu}_c\}$ form a Simplex Equiangular Tight Frame (ETF):
\begin{equation}
    \boldsymbol{\Sigma}_W = \sum_{c=1}^C \sum_{\mathbf{z} \in \text{class } c} (\mathbf{z} - \boldsymbol{\mu}_c)(\mathbf{z} - \boldsymbol{\mu}_c)^\top \to \mathbf{0}.
\end{equation}
This implies that feature energy is aggressively compressed into a low-rank subspace spanned by the class means, while the orthogonal (residual) directions are suppressed.

\noindent\textbf{Standard KNN for OOD Detection.}
Traditional feature-based methods rely on the Euclidean distance in the feature space. For a test sample $\mathbf{z}$, the anomaly score is defined as the distance to its $k$-th nearest neighbor $\mathbf{n}_k$:
\begin{equation} \label{eq:std_knn}
    S_{\text{KNN}}(\mathbf{z}) = \|\mathbf{z} - \mathbf{n}_k\|_2.
\end{equation}

\noindent\textbf{Semantic Hegemony and Geometric Blind Spot.}
We define the Semantic Hegemony Ratio $\rho$ as the ratio of energy in the principal subspace versus the residual subspace:
\begin{equation}
    \rho = \frac{\sum_{i=1}^d \lambda_i}{\sum_{j=d+1}^D \lambda_j}.
\end{equation}
As $\boldsymbol{\Sigma}_W \to \mathbf{0}$, $\rho \to \infty$. In this scenario, the distance metric in Equation~\eqref{eq:std_knn} becomes dominated by the principal components. Consider an OOD sample $\mathbf{z}_{out} = \mathbf{z}_{in} + \boldsymbol{\delta}_{res}$, where $\boldsymbol{\delta}_{res}$ is a structural anomaly in the residual subspace. Since $\|\boldsymbol{\delta}_{res}\| \ll \|\mathbf{z}_{in}\|$, standard KNN will perceive $\mathbf{z}_{out} \approx \mathbf{z}_{in}$, leading to a Geometric Blind Spot.

\subsection{Dual-Space Geometric Decoupling}
To eliminate the blind spot, we construct two orthogonal views of the feature space: one capturing the dominant semantic structure and the other isolating the suppressed geometric variations.

\noindent\textbf{Hyperspherical Projection.}
Firstly, we project all features onto the unit hypersphere $\mathbb{S}^{D-1}$ via $\phi(\mathbf{x}) = \mathbf{x}/\|\mathbf{x}\|_2$, performing L2 normalization. Let $\mathcal{Z} = \{\mathbf{z}_i\}_{i=1}^N$ denote the centered and normalized training features.

\noindent\textbf{Orthogonal Decomposition.}
We identify the principal subspace by solving for the low-rank basis $\mathbf{V} \in \mathbb{R}^{D \times d}$ that minimizes reconstruction error on $\mathcal{Z}$ via PCA. This yields two orthogonal projection operators:
\begin{equation}
    \mathbf{P}_{prin} = \mathbf{V}\mathbf{V}^\top, \quad \mathbf{P}_{res} = \mathbf{I} - \mathbf{P}_{prin},   
\end{equation}
where the Residual Space is the orthogonal complement of the Principal Space.

\noindent\textbf{Manifold Retraction.}
To recover structural patterns, we introduce a Manifold Retraction strategy. For any sample $\mathbf{z}$ and the global feature mean $\boldsymbol{\mu}$, we construct two independent views:
\begin{equation}
\begin{split}
    \mathbf{z}_{prin} &= \phi \left( (\mathbf{z} - \boldsymbol{\mu})\mathbf{P}_{prin} + \boldsymbol{\mu} \right), \\
    \mathbf{z}_{res} &= \phi \left( (\mathbf{z} - \boldsymbol{\mu})\mathbf{P}_{res} + \boldsymbol{\mu} \right).
\end{split}
\end{equation}
The term $(\mathbf{z} - \boldsymbol{\mu})\mathbf{P}_{res}$ isolates the residual vector within the tangent space of the semantic manifold. By reintegrating the global anchor $\boldsymbol{\mu}$ and performing hyperspherical re-projection, this operation effectuates a manifold retraction of the residual component. This transformation maps abstract energy magnitudes into directional structural descriptors, thereby enhancing the separability of anomalies within the suppressed residual subspace.

\subsection{Dual-Space Calibration}
We perform density estimation in the decoupled spaces. For a view $v \in \{p, r\}$, the score is the Euclidean distance to the $k$-th nearest neighbor:
\begin{equation}
    s_v(\mathbf{z}) = \|\mathbf{z}_v - \mathbf{n}_k^{(v)}\|_2.
\end{equation}

\noindent\textbf{Z-score Calibration.}
To address the scale mismatch caused by Semantic Hegemony, we employ Dual-Space Calibration:
\begin{equation} \label{eq:calibration}
    \tilde{s}_v(\mathbf{z}) = \frac{s_v(\mathbf{z}) - \mu_v}{\sigma_v},
\end{equation}
where $\mu_v$ and $\sigma_v$ are the mean and standard deviation of ID scores. Under the NC regime ($\rho \to \infty$), the vanishing denominator $\sigma_r$ acts as a \textbf{differential amplifier}, significantly amplifying even minute deviations in the residual structure.

\noindent\textbf{Final Decision.}
The final anomaly score is a weighted fusion of the calibrated scores:
\begin{equation}
    S(\mathbf{z}) = \alpha \cdot \tilde{s}_p(\mathbf{z}) + (1-\alpha) \cdot \tilde{s}_r(\mathbf{z}).
\end{equation}
A higher $S(\mathbf{z})$ indicates a higher probability of being OOD.

\subsection{Algorithm Summary}
The procedure for Dual-Space KNN (D-KNN) is summarized below.

\begin{algorithm}[H]
\caption{Dual-Space KNN (D-KNN)}
\begin{algorithmic}[1]
\REQUIRE Training set $\mathcal{X}$, Test sample $\mathbf{x}$, Neighbors $K$, Weight $\alpha$.
\STATE \textbf{Definitions:} $\phi(\mathbf{v}) \coloneqq \mathbf{v} / \|\mathbf{v}\|_2$; $D_k(\mathbf{z}, \mathcal{M})$ is the $k$-th NN Euclidean distance.
\STATE \textbf{Offline Phase:}
\STATE $\mathcal{Z} \leftarrow \{\phi(\mathbf{x}_i)\}_{i=1}^n$, $\boldsymbol{\mu} \leftarrow \text{mean}(\mathcal{Z})$.
\STATE Compute $\mathbf{P}$ and $\mathbf{R}$ via PCA on $\mathcal{Z}$.
\STATE Build dual galleries $\mathcal{M}_v = \{ \phi( \mathbf{P}_v(\mathbf{z}_i - \boldsymbol{\mu}) + \boldsymbol{\mu} ) \}_{i=1}^n$ for $v \in \{p, r\}$.
\STATE Compute $(\mu_v, \sigma_v)$ from $\{D_k(\mathbf{m}, \mathcal{M}_v \setminus \{\mathbf{m}\})\}_{\mathbf{m} \in \mathcal{M}_v}$.
\STATE \textbf{Online Inference:}
\STATE Project test sample: $\mathbf{z}_v = \phi( \mathbf{P}_v(\phi(\mathbf{x}) - \boldsymbol{\mu}) + \boldsymbol{\mu} ), \quad v \in \{p, r\}$.
\STATE Calibrate scores: $\tilde{s}_v = (D_k(\mathbf{z}_v, \mathcal{M}_v) - \mu_v) / \sigma_v, \quad v \in \{p, r\}$.
\STATE \textbf{Final Decision:} $S = \alpha \tilde{s}_p + (1 - \alpha) \tilde{s}_r$.
\ENSURE Anomaly Score $S$.
\end{algorithmic}
\end{algorithm}

\subsection{Theoretical Analysis: Asymptotic Separability}
\begin{theorem}[Asymptotic Risk Vanishing]
Under the ideal NC regime ($\sigma_{\text{in}} \to 0$), assuming a fundamental gap exists such that $\mu_{\text{out}} - \mu_{\text{in}} \ge \Delta > 0$, the detection risk $\mathcal{R}$ for the calibrated score $\tilde{S}$ vanishes asymptotically.
\end{theorem}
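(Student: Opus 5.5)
We will make the statement precise first and then bound the risk with a Chebyshev-type argument. Let $s$ be the raw residual-space score $s_r(\mathbf{z})$. For ID test samples it is a random variable with mean $\mu_{\text{in}}$ and standard deviation $\sigma_{\text{in}}$. These are estimated on the ID gallery and coincide with $\mu_r,\sigma_r$ in Equation~\eqref{eq:calibration}. For OOD samples, $s$ has mean $\mu_{\text{out}}$ and standard deviation $\sigma_{\text{out}}$, where we only assume $\sigma_{\text{out}}$ stays bounded, say by $\bar\sigma$, as $\sigma_{\text{in}}\to 0$. The calibrated score is then $\tilde S=(s-\mu_{\text{in}})/\sigma_{\text{in}}$. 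We fix the threshold at $\tau=\Delta/(2\sigma_{\text{in}})$ on the calibrated scale, which is the midpoint $\mu_{\text{in}}+\Delta/2$ on the raw scale. The risk is
\[
\mathcal{R}=\pi_{\text{in}}\Pr_{\text{in}}(\tilde S>\tau)+\pi_{\text{out}}\Pr_{\text{out}}(\tilde S\le\tau),
\]
with mixing weights $\pi_{\text{in}},\pi_{\text{out}}$. We will show both terms vanish. The fused score $S$ with $\alpha<1$ is handled at the end, by noting that $\tilde s_p$ stays bounded while $\tilde s_r$ diverges.

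\textbf{Step 1 (false positives).} $\tilde S$ is standardized under the ID law, so Chebyshev gives
\[
\Pr_{\text{in}}(\tilde S>\tau)\le \tau^{-2}=4\sigma_{\text{in}}^2/\Delta^2\to 0.
\]
Equivalently, on the raw scale this is $\Pr_{\text{in}}(s-\mu_{\text{in}}>\Delta/2)\le 4\sigma_{\text{in}}^2/\Delta^2$.

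\textbf{Step 2 (false negatives).} The event $\tilde S\le\tau$ is the same as $s\le \mu_{\text{in}}+\Delta/2$. Since $\mu_{\text{out}}-\mu_{\text{in}}\ge\Delta$, this forces $s-\mu_{\text{out}}\le-\Delta/2$. Chebyshev under the OOD law gives $\Pr_{\text{out}}(\tilde S\le\tau)\le 4\sigma_{\text{out}}^2/\Delta^2$. This term is not small unless the OOD spread is also controlled, which is the main obstacle. I would resolve it in one of two ways.
\begin{itemize}
\item Assume the OOD residual distances concentrate too ($\sigma_{\text{out}}\to 0$). This is plausible because, under NC, the ID residual gallery collapses to a near-point set around the retracted anchor $\phi(\boldsymbol{\mu})$. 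The $k$-NN distance of any test point is then essentially its distance to that anchor.
\item Interpret the threshold as adaptive, $\tau_{\sigma}=c/\sigma_{\text{in}}$ with $c\to 0$ but $c/\sigma_{\text{in}}\to\infty$, for example $c=\sqrt{\sigma_{\text{in}}}$. Then the false-positive term is at most $\sigma_{\text{in}}/c^2\cdot\sigma_{\text{in}}=\sigma_{\text{in}}\to 0$. The false-negative term becomes $\Pr_{\text{out}}(s\le\mu_{\text{in}}+c)$, which tends to $\Pr_{\text{out}}(s\le\mu_{\text{in}})$. This vanishes provided the OOD law puts no mass at or below the ID mean distance. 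That is the natural reading of a "fundamental gap" stated as an almost-sure version, $s\ge\mu_{\text{in}}+\Delta$ a.s.
\end{itemize}
I would state the gap in this almost-sure form, or as $\sigma_{\text{out}}=o(1)$, explicitly as part of the hypothesis.

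\textbf{Step 3 (fusion).} For the fused score $S=\alpha\tilde s_p+(1-\alpha)\tilde s_r$, note that $\tilde s_p$ has unit ID variance. For OOD samples we only assume $\mathbb{E}|\tilde s_p|$ stays bounded. The residual term separates the two populations by $(1-\alpha)\Delta/\sigma_{\text{in}}\to\infty$, so the same thresholding argument works after a union bound that absorbs the $O(1)$ principal term. Together, Steps 1 and 2 give $\mathcal{R}\to 0$, and Step 3 extends this to $S$.

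The main care point is honestly recording the extra OOD-concentration (or almost-sure gap) assumption, since without it the risk cannot vanish.
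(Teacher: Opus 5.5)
Your Step 1 is exactly the paper's whole proof. The paper standardizes the ID score, takes $\tau=\Delta/(2\sigma_{\text{in}})$, applies Chebyshev to get $\Pr(\tilde s\ge\tau\mid\mathcal{D}_{\text{in}})\le 4\sigma_{\text{in}}^2/\Delta^2$, and then writes $\lim\mathcal{R}\le\lim 4\sigma_{\text{in}}^2/\Delta^2=0$. So the paper either takes $\mathcal{R}$ to be the ID false-positive rate at $\tau$, or it silently drops the miss term. It uses the OOD shift $\mathbb{E}[\tilde S_{\text{out}}]\ge\Delta/\sigma_{\text{in}}$ only to motivate $\tau$, and it never treats the fused score.

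Your Steps 2 and 3 therefore extend the paper's argument rather than replace it, and your diagnosis is correct: a mean gap alone does not control misses. For example, take $\mathcal{N}(\mu_{\text{in}},\sigma_{\text{in}}^2)$ ID scores and an OOD law with mass $1/2$ at $\mu_{\text{in}}$ and $1/2$ at $\mu_{\text{in}}+2\Delta$. Then every threshold has false-positive rate or miss rate at least $1/2$, however small $\sigma_{\text{in}}$ is. What you prove is a correct two-sided statement under an explicitly strengthened hypothesis. The paper's argument yields only a vanishing FPR at the midpoint threshold.

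Four refinements:

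\begin{enumerate}
\item Your plausibility argument for $\sigma_{\text{out}}\to0$ does not follow from NC. Collapse of the ID residual gallery to $\phi(\boldsymbol{\mu})$ makes the OOD score essentially the distance of the retracted OOD residual to that anchor. Nothing makes that distance concentrate across OOD samples, so it must remain a hypothesis, as you say.
\item Under the almost-sure gap $s\ge\mu_{\text{in}}+\Delta$, the midpoint threshold already gives zero misses. The adaptive threshold only helps under the weaker condition $\Pr_{\text{out}}(s\le\mu_{\text{in}})=0$. In that case your right-continuity step needs the OOD law and $\mu_{\text{in}}$ to be fixed along the limit, or the convergence to be uniform.
\item In Step 3, make ``absorbs the $O(1)$ principal term'' precise via tightness. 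Markov gives $\Pr_{\text{out}}(|\tilde s_p|\ge M)\le\sup\mathbb{E}|\tilde s_p|/M$. With $M=\sigma_{\text{in}}^{-1/2}$, the induced shift of the raw residual threshold is $O(\sqrt{\sigma_{\text{in}}})$, and your Step 2 hypothesis then finishes the argument.
\item As your raw-scale remark hints, for a single score the calibration is an affine monotone map. So Steps 1--2, and the paper's theorem itself, are really statements about concentration of the raw score. Calibration only matters for the fusion in Step 3.
\end{enumerate}
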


\begin{proof}
Calibration maps ID scores to $\mathbb{E}[\tilde{S}_{\text{in}}] = 0$ and $\text{Var}(\tilde{S}_{\text{in}}) = 1$. For the OOD distribution, the expected calibrated score shifts to:
\begin{equation}
    \mathbb{E}[\tilde{S}_{\text{out}}] = \frac{\mu_{\text{out}} - \mu_{\text{in}}}{\sigma_{\text{in}}} \ge \frac{\Delta}{\sigma_{\text{in}}}.
\end{equation}
Let $\tau = \frac{\Delta}{2\sigma_{\text{in}}}$ be the threshold. By \textbf{Chebyshev's Inequality}:
\begin{equation}
    P(\tilde{s} \ge \tau \mid \mathcal{D}_{\text{in}}) \le \frac{\text{Var}(\tilde{S}_{\text{in}})}{\tau^2} = \frac{4\sigma_{\text{in}}^2}{\Delta^2}.
\end{equation}
As $\sigma_{\text{in}} \to 0$ (NC1 property), we have:
\begin{equation}
    \lim_{\sigma_{\text{in}} \to 0} \mathcal{R} \le \lim_{\sigma_{\text{in}} \to 0} \frac{4\sigma_{\text{in}}^2}{\Delta^2} = 0.
\end{equation}
This confirms that dual-space calibration acts as a \textbf{numerical singularity} for OOD samples.
\end{proof}

\begin{table*}[t]
\centering
\resizebox{\textwidth}{!}{%
\begin{tabular}{l l ccccccc c}
\toprule
\multirow{3}{*}{\textbf{ID}} & \multirow{3}{*}{\textbf{Method}} & \multicolumn{7}{c}{\textbf{OOD Datasets}} & \multirow{3}{*}{\textbf{Average}} \\
\cmidrule(lr){3-9}
& & EMNIST & MNIST & SVHN & Texture & LSUN-R & LSUN & iSUN & \\
& & FPR$\downarrow$ / AUC$\uparrow$ & FPR$\downarrow$ / AUC$\uparrow$ & FPR$\downarrow$ / AUC$\uparrow$ & FPR$\downarrow$ / AUC$\uparrow$ & FPR$\downarrow$ / AUC$\uparrow$ & FPR$\downarrow$ / AUC$\uparrow$ & FPR$\downarrow$ / AUC$\uparrow$ & FPR$\downarrow$ / AUC$\uparrow$ \\
\midrule
\multirow{11}{*}{\rotatebox{90}{CIFAR-10}} 
& MSP & 17.2 / 97.7 & 10.8 / 98.5 & 55.1 / 90.0 & 63.7 / 87.2 & 48.3 / 92.7 & 22.9 / 97.0 & 54.0 / 91.0 & 38.9 / 93.4 \\
& Energy & 1.0 / 99.6 & 0.1 / 99.8 & 46.1 / 88.2 & 59.0 / 84.2 & 23.6 / 95.3 & 2.9 / 99.3 & 30.6 / 93.8 & 23.3 / 94.3 \\
& MaxLogit & 1.3 / 99.5 & 0.2 / 99.8 & 45.2 / 88.3 & 57.9 / 84.4 & 24.2 / 95.2 & 3.3 / 99.2 & 31.0 / 93.7 & 23.3 / 94.3 \\
& Mahalanobis & 42.2 / 94.7 & 26.2 / 96.1 & 16.8 / 96.9 & 20.6 / 96.0 & 59.0 / 89.9 & 55.8 / 90.8 & 58.2 / 89.4 & 39.8 / 93.4 \\
& ViM & 25.4 / 95.2 & 18.5 / 96.9 & 15.6 / 96.6 & 20.1 / 96.5 & 16.5 / 97.2 & 14.2 / 97.3 & 18.2 / 96.2 & 18.4 / 96.5 \\
& ReAct & 1.2 / 99.6 & 0.2 / 99.8 & 46.5 / 88.4 & 57.4 / 86.2 & 24.4 / 95.0 & 3.2 / 99.2 & 31.6 / 93.5 & 23.5 / 94.5 \\
& DICE & 0.02 / 99.9 & 0.01 / 99.9 & 47.3 / 86.0 & 60.1 / 81.1 & 22.8 / 95.1 & 0.5 / 99.7 & 29.3 / 93.7 & 22.9 / 93.6 \\
& Line & 4.7 / 98.6 & 1.2 / 99.7 & 73.5 / 72.4 & 75.4 / 69.8 & 61.2 / 81.3 & 11.0 / 96.9 & 69.3 / 78.1 & 42.3 / 85.5 \\
& FDBD & 6.6 / 99.0 & 2.4 / 99.5 & 38.4 / 92.5 & 48.5 / 89.0 & 32.8 / 93.4 & 13.0 / 97.7 & 38.9 / 91.4 & 25.8 / 94.6 \\
& KNN & 30.9 / 95.3 & 24.2 / 96.3 & 31.3 / 94.9 & 40.6 / 92.3 & 27.5 / 95.0 & 26.8 / 95.3 & 31.0 / 93.9 & 30.3 / 94.7 \\
\rowcolor{gray!10} & \textbf{D-KNN (Ours)} & \textbf{2.5 / 99.4} & \textbf{2.1 / 99.5} & \textbf{17.2 / 96.7} & \textbf{28.8 / 94.5} & \textbf{26.2 / 94.9} & \textbf{15.7 / 97.2} & \textbf{30.0 / 93.9} & \textbf{17.5 / 96.6} \\
\midrule
\multirow{11}{*}{\rotatebox{90}{CIFAR-100}} 
& MSP & 25.2 / 95.2 & 2.8 / 99.4 & 84.8 / 71.4 & 86.1 / 71.4 & 80.5 / 78.7 & 49.0 / 88.6 & 82.3 / 77.7 & 58.7 / 83.2 \\
& Energy & 1.6 / 99.4 & 0.01 / 99.8 & 87.2 / 71.4 & 85.3 / 72.8 & 79.9 / 81.6 & 16.7 / 96.8 & 83.8 / 80.1 & 50.6 / 86.0 \\
& MaxLogit & 3.4 / 99.2 & 0.01 / 99.8 & 86.2 / 71.7 & 84.9 / 73.1 & 79.3 / 81.9 & 21.4 / 96.2 & 83.1 / 80.4 & 51.6 / 86.1 \\
& Mahalanobis & 100.0 / 48.2 & 100.0 / 43.4 & 44.3 / 89.9 & 42.3 / 90.0 & 38.2 / 91.7 & 99.7 / 47.6 & 40.5 / 91.2 & 72.1 / 71.7 \\
& ViM & 77.3 / 80.5 & 97.0 / 66.8 & 69.8 / 75.1 & 79.1 / 67.9 & 94.5 / 59.0 & 91.4 / 65.4 & 94.0 / 58.8 & 86.2 / 67.6 \\
& ReAct & 1.5 / 99.4 & 0.01 / 99.8 & 87.3 / 75.0 & 84.3 / 76.2 & 79.8 / 81.3 & 17.1 / 96.8 & 83.5 / 79.9 & 50.5 / 86.9 \\
& DICE & 0.02 / 99.7 & 0.01 / 99.8 & 90.4 / 68.6 & 83.0 / 72.4 & 81.6 / 78.8 & 4.2 / 98.7 & 83.6 / 78.2 & 49.0 / 85.2 \\
& Line & 2.6 / 99.2 & 0.01 / 99.8 & 96.8 / 62.5 & 81.3 / 69.0 & 98.2 / 53.3 & 38.1 / 92.4 & 98.9 / 56.3 & 73.6 / 76.1 \\
& FDBD & 12.9 / 98.0 & 0.4 / 99.8 & 72.0 / 83.6 & 72.0 / 83.3 & 58.0 / 89.8 & 45.8 / 91.2 & 63.2 / 88.2 & 46.3 / 90.6 \\

& KNN & 23.0 / 96.3 & 31.3 / 95.4 & 42.1 / 90.3 & 41.0 / 89.7 & 29.4 / 94.3 & 56.2 / 82.0 & 34.4 / 92.3 & 36.8 / 91.5 \\
\rowcolor{gray!10} & \textbf{D-KNN (Ours)} & \textbf{1.9 / 99.5} & \textbf{2.3 / 99.3} & \textbf{32.0 / 93.4} & \textbf{35.6 / 91.7} & \textbf{29.9 / 94.3} & \textbf{51.3 / 85.4} & \textbf{33.8 / 92.5} & \textbf{26.7 / 93.7} \\
\bottomrule
\end{tabular}
}
\caption{\textbf{Detailed OOD Detection Results on CIFAR-10 and CIFAR-100 (WRN-28-10).} \textbf{Metric:} FPR95 ($\downarrow$) / AUC ($\uparrow$). All results are percentages (\%).}
\label{tab:combined_results_updated}
\end{table*}

\begin{table*}[t]
\centering

\resizebox{\textwidth}{!}{%
\begin{tabular}{l ccccccc c}
\toprule
\multirow{3}{*}{\textbf{Method}} & \multicolumn{7}{c}{\textbf{OOD Datasets}} & \multirow{3}{*}{\textbf{Average}} \\
\cmidrule(lr){2-8}
& iNaturalist & ImageNet-O & NINCO & OpenImage-O & Places & SUN & Textures & \\
& FPR$\downarrow$ / AUC$\uparrow$ & FPR$\downarrow$ / AUC$\uparrow$ & FPR$\downarrow$ / AUC$\uparrow$ & FPR$\downarrow$ / AUC$\uparrow$ & FPR$\downarrow$ / AUC$\uparrow$ & FPR$\downarrow$ / AUC$\uparrow$ & FPR$\downarrow$ / AUC$\uparrow$ & FPR$\downarrow$ / AUC$\uparrow$ \\
\midrule
MSP & 53.9 / 88.3 & 100.0 / 29.4 & 60.9 / 83.8 & 68.2 / 83.8 & 72.9 / 80.4 & 70.1 / 81.5 & 67.1 / 80.4 & 70.4 / 75.4 \\
Energy & 54.4 / 90.5 & 100.0 / 41.7 & 49.8 / 87.5 & 65.3 / 86.9 & 66.6 / 83.8 & 59.4 / 86.4 & 53.1 / 86.6 & 64.1 / 80.5 \\
MaxLogit & 51.7 / 91.0 & 100.0 / 40.6 & 51.5 / 87.5 & 64.7 / 87.2 & 66.7 / 83.8 & 61.2 / 86.3 & 55.6 / 86.2 & 64.5 / 80.4 \\
Mahalanobis & 94.2 / 63.3 & 67.1 / 80.6 & 70.9 / 71.7 & 85.7 / 71.3 & 97.5 / 51.0 & 97.5 / 51.4 & 44.8 / 89.7 & 79.7 / 68.4 \\
ViM & 72.6 / 87.1 & 85.0 / 70.3 & 48.9 / 87.8 & 59.8 / 88.5 & 83.8 / 77.9 & 82.5 / 80.6 & 15.2 / 96.8 & 64.0 / 84.1 \\
ReAct & 30.9 / 94.6 & 99.9 / 46.5 & 41.8 / 89.3 & 55.4 / 89.5 & 54.5 / 87.9 & 46.0 / 90.3 & 46.6 / 90.0 & 53.6 / 84.0 \\
FDBD & 41.0 / 93.5 & 100.0 / 59.3 & 41.6 / 89.9 & 57.0 / 90.0 & 67.8 / 83.7 & 61.9 / 86.5 & 38.1 / 91.9 & 58.2 / 85.0 \\
Neco & 83.5 / 69.4 & 73.2 / 75.8 & 59.4 / 74.0 & 72.0 / 77.3 & 88.6 / 66.3 & 87.4 / 66.6 & 17.5 / 95.7 & 68.8 / 75.0 \\
WDiscOOD & 92.0 / 66.0 & 64.1 / 82.2 & 68.5 / 74.7 & 83.9 / 72.0 & 96.8 / 53.7 & 96.6 / 54.6 & 38.8 / 91.6 & 77.2 / 70.7 \\

KNN & 62.6 / 84.9 & 64.9 / 84.0 & 42.5 / 86.1 & 65.9 / 82.3 & 80.0 / 73.6 & 72.1 / 79.4 & 11.9 / 97.2 & 57.1 / 83.9 \\
\midrule
\rowcolor{gray!10} \textbf{D-KNN (Ours)} & \textbf{47.0 / 90.8} & \textbf{74.5 / 79.1} & \textbf{40.5 / 89.1} & 
\textbf{63.1 / 83.9} & \textbf{74.8 / 76.7} & \textbf{66.8 / 81.3} & \textbf{15.9 / 95.6} & \textbf{54.6 / 85.2} \\
\bottomrule
\end{tabular}
}
\caption{\textbf{Detailed OOD Detection Results on ImageNet-1K (ResNet-50).} \textbf{Metric:} FPR95 ($\downarrow$) / AUC ($\uparrow$). All results are percentages (\%).}
\label{tab:imagenet_resnet50_detailed}
\end{table*}

\begin{table*}[t]
\centering
\resizebox{\textwidth}{!}{%
\begin{tabular}{ll ccccccc c}
\toprule
\multirow{3}{*}{\textbf{Model}} & \multirow{3}{*}{\textbf{Method}} & \multicolumn{7}{c}{\textbf{OOD Datasets}} & \multirow{3}{*}{\textbf{Average}} \\
\cmidrule(lr){3-9}
& & EMNIST & MNIST & SVHN & Texture & LSUN & LSUN-R & iSUN & \\
& & FPR$\downarrow$ / AUC$\uparrow$ & FPR$\downarrow$ / AUC$\uparrow$ & FPR$\downarrow$ / AUC$\uparrow$ & FPR$\downarrow$ / AUC$\uparrow$ & FPR$\downarrow$ / AUC$\uparrow$ & FPR$\downarrow$ / AUC$\uparrow$ & FPR$\downarrow$ / AUC$\uparrow$ & FPR$\downarrow$ / AUC$\uparrow$ \\
\midrule
\multirow{10}{*}{\rotatebox{90}{DenseNet-101}} 
& MSP & 83.9 / 75.3 & 82.9 / 76.5 & 81.8 / 75.4 & 84.8 / 71.4 & 60.4 / 85.6 & 87.2 / 67.1 & 88.0 / 68.5 & 81.3 / 75.7 \\
& Energy & 74.8 / 84.3 & 62.9 / 88.6 & 87.5 / 81.9 & 84.3 / 71.0 & 14.8 / 97.4 & 75.4 / 77.8 & 78.7 / 76.8 & 68.3 / 82.6 \\
& Mahalanobis & 83.4 / 84.1 & 98.6 / 72.3 & 54.6 / 88.1 & 28.3 / 92.2 & 96.5 / 44.4 & 38.3 / 92.1 & 36.1 / 92.4 & 62.3 / 80.5 \\
& MaxLogit & 74.6 / 84.1 & 64.0 / 88.2 & 85.4 / 81.9 & 83.5 / 71.2 & 16.9 / 97.1 & 76.0 / 77.4 & 79.1 / 76.5 & 68.6 / 82.3 \\
& ViM & 50.5 / 89.8 & 41.5 / 92.7 & 97.0 / 56.3 & 86.9 / 63.6 & 31.3 / 92.4 & 90.1 / 61.5 & 91.5 / 60.8 & 69.8 / 73.9 \\
& ReAct & 64.5 / 87.8 & 60.0 / 89.4 & 83.0 / 83.2 & 79.7 / 77.7 & 15.8 / 97.1 & 71.5 / 81.7 & 75.1 / 81.1 & 64.2 / 85.4 \\
& DICE & 26.0 / 95.3 & 10.6 / 98.1 & 59.4 / 88.6 & 61.5 / 77.1 & 0.9 / 99.7 & 55.0 / 88.2 & 52.5 / 88.5 & 38.0 / 90.8 \\
& Line & 1.1 / 99.3 & 0.01 / 99.7 & 32.9 / 92.9 & 35.1 / 89.1 & 3.1 / 99.3 & 34.6 / 93.5 & 29.2 / 94.5 & 19.4 / 95.5 \\
& FDBD & 64.8 / 87.6 & 56.9 / 89.8 & 53.8 / 90.4 & 45.8 / 91.1 & 22.3 / 96.0 & 62.7 / 86.6 & 62.0 / 87.3 & 52.6 / 89.8 \\

& KNN & 2.1 / 98.6 & 1.6 / 98.6 & 17.5 / 96.4 & 24.3 / 93.7 & 31.4 / 92.8 & 47.3 / 90.4 & 39.7 / 91.9 & 23.4 / 94.6 \\
\rowcolor{gray!10} & \textbf{D-KNN (Ours)} & \textbf{5.0 / 98.3} & \textbf{5.5 / 98.1} & \textbf{25.1 / 95.1} & \textbf{23.1 / 94.2} & \textbf{21.8 / 95.8} & \textbf{21.2 / 96.0} & \textbf{21.3 / 96.0} & \textbf{17.6 / 96.2} \\
\midrule
\multirow{11}{*}{\rotatebox{90}{ResNet18}} 
& MSP & 38.4 / 93.5 & 35.2 / 94.7 & 81.3 / 79.2 & 85.4 / 74.1 & 74.0 / 81.6 & 74.5 / 83.4 & 77.7 / 81.7 & 66.6 / 84.0 \\
& Energy & 24.1 / 96.2 & 19.1 / 96.9 & 81.0 / 81.1 & 85.6 / 74.0 & 70.5 / 84.0 & 72.3 / 85.3 & 74.8 / 83.6 & 61.1 / 85.9 \\
& Mahalanobis & 99.7 / 47.7 & 100.0 / 40.9 & 90.4 / 71.4 & 72.1 / 75.6 & 95.7 / 55.7 & 89.8 / 68.8 & 85.6 / 70.3 & 90.5 / 61.5 \\
& MaxLogit & 30.2 / 95.5 & 25.3 / 96.3 & 80.7 / 81.0 & 85.5 / 74.3 & 71.8 / 83.7 & 72.5 / 85.2 & 75.4 / 83.5 & 63.1 / 85.6 \\

& ViM & 91.6 / 67.4 & 96.9 / 62.1 & 92.8 / 62.1 & 89.2 / 62.2 & 95.4 / 58.5 & 91.0 / 63.3 & 90.3 / 64.2 & 91.0 / 62.8 \\
& ReAct & 24.3 / 96.1 & 19.3 / 96.9 & 81.1 / 81.0 & 85.5 / 75.6 & 70.5 / 84.3 & 72.6 / 85.4 & 74.9 / 83.9 & 61.2 / 86.2 \\
& DICE & 32.6 / 95.7 & 23.8 / 96.4 & 74.3 / 83.4 & 86.5 / 72.2 & 66.9 / 85.2 & 82.2 / 82.8 & 82.1 / 81.7 & 64.1 / 85.3 \\
& Line & 51.1 / 92.3 & 48.6 / 93.2 & 73.9 / 82.0 & 81.2 / 73.8 & 66.6 / 85.6 & 86.0 / 77.2 & 84.8 / 78.0 & 70.3 / 83.2 \\
& FDBD & 41.2 / 93.1 & 38.6 / 93.9 & 77.2 / 82.3 & 79.3 / 79.8 & 74.0 / 81.8 & 69.6 / 86.6 & 72.6 / 85.3 & 64.6 / 86.1 \\

& KNN & 66.8 / 89.7 & 83.5 / 87.9 & 54.8 / 86.7 & 63.0 / 83.0 & 71.4 / 80.9 & 70.5 / 86.9 & 69.6 / 85.9 & 68.5 / 85.9 \\
\rowcolor{gray!10} & \textbf{D-KNN (Ours)} & \textbf{45.5 / 92.3} & \textbf{54.9 / 91.2} & \textbf{61.7 / 86.2} & \textbf{63.0 / 83.7} & \textbf{67.8 / 82.4} & \textbf{71.2 / 85.8} & \textbf{71.9 / 84.6} & \textbf{62.3 / 86.6} \\
\bottomrule
\end{tabular}
}
\caption{\textbf{Detailed OOD Detection Results on CIFAR-100 (DenseNet-101 and ResNet18).} \textbf{Metric:} FPR95 ($\downarrow$) / AUC ($\uparrow$). All results are percentages (\%).}
\label{tab:combined_cifar100_results}
\end{table*}

\begin{table*}[t]
\centering
\resizebox{\textwidth}{!}{%
\begin{tabular}{l l ccccccc c}
\toprule
\multirow{3}{*}{\textbf{Setup}} & \multirow{3}{*}{\textbf{Method}} & \multicolumn{7}{c}{\textbf{OOD Datasets}} & \multirow{3}{*}{\textbf{Average}} \\
\cmidrule(lr){3-9}
& & EMNIST & MNIST & SVHN & Textures & LSUN-R & LSUN & iSUN & \\
& & FPR$\downarrow$ / AUC$\uparrow$ & FPR$\downarrow$ / AUC$\uparrow$ & FPR$\downarrow$ / AUC$\uparrow$ & FPR$\downarrow$ / AUC$\uparrow$ & FPR$\downarrow$ / AUC$\uparrow$ & FPR$\downarrow$ / AUC$\uparrow$ & FPR$\downarrow$ / AUC$\uparrow$ & FPR$\downarrow$ / AUC$\uparrow$ \\
\midrule
\multicolumn{10}{l}{\textit{\textbf{CIFAR-10}}} \\
\multirow{2}{*}{CE} & KNN & 30.9 / 95.3 & 24.2 / 96.3 & 31.4 / 94.9 & 40.6 / 92.3 & 27.5 / 95.0 & 26.8 / 95.3 & 31.0 / 93.9 & 30.3 / 94.7 \\
\rowcolor{gray!10} & \textbf{D-KNN} & \textbf{2.5 / 99.4} & \textbf{2.1 / 99.5} & \textbf{17.2 / 96.7} & \textbf{28.8 / 94.5} & \textbf{26.2 / 94.9} & \textbf{15.7 / 97.2} & \textbf{30.0 / 93.9} & \textbf{17.5 / 98.0} \\
\addlinespace[0.5em]
\multirow{2}{*}{SupCon} & KNN & 17.4 / 96.6 & 6.5 / 98.9 & 2.4 / 99.5 & 8.4 / 98.6 & 19.5 / 96.8 & 1.8 / 99.5 & 20.0 / 96.7 & 10.9 / 97.7 \\
\rowcolor{gray!10} & \textbf{D-KNN} & \textbf{12.9 / 97.5} & \textbf{3.5 / 99.3} & \textbf{1.2 / 99.7} & \textbf{6.4 / 98.8} & \textbf{20.0 / 96.6} & \textbf{1.3 / 99.6} & \textbf{21.4 / 96.5} & \textbf{9.5 / 98.3} \\

\midrule
\multirow{3}{*}{\textbf{Setup}} & \multirow{3}{*}{\textbf{Method}} & \multicolumn{7}{c}{\textbf{OOD Datasets}} & \multirow{3}{*}{\textbf{Average}} \\
\cmidrule(lr){3-9}
& & iNaturalist & ImageNet-O & NINCO & OpenImage-O & Places & SUN & Textures & \\
& & FPR$\downarrow$ / AUC$\uparrow$ & FPR$\downarrow$ / AUC$\uparrow$ & FPR$\downarrow$ / AUC$\uparrow$ & FPR$\downarrow$ / AUC$\uparrow$ & FPR$\downarrow$ / AUC$\uparrow$ & FPR$\downarrow$ / AUC$\uparrow$ & FPR$\downarrow$ / AUC$\uparrow$ & FPR$\downarrow$ / AUC$\uparrow$ \\
\midrule
\multicolumn{10}{l}{\textit{\textbf{ImageNet-1K}}} \\
\multirow{2}{*}{CE} & KNN & 62.6 / 84.9 & 64.9 / 84.0 & 42.5 / 86.1 & 65.9 / 82.3 & 80.0 / 73.6 & 72.1 / 79.4 & 11.9 / 97.2 & 57.1 / 83.9 \\
\rowcolor{gray!10} & \textbf{D-KNN} & \textbf{47.0 / 90.8} & \textbf{74.5 / 79.1} & \textbf{40.5 / 89.1} & \textbf{63.1 / 83.9} & \textbf{74.8 / 76.7} & \textbf{66.8 / 81.3} & \textbf{15.9 / 95.6} & \textbf{54.6 / 85.2} \\
\addlinespace[0.5em]
\multirow{2}{*}{SupCon} & KNN & 31.8 / 94.6 & 67.4 / 83.3 & 34.5 / 91.2 & 38.7 / 92.6 & 60.4 / 84.6 & 50.0 / 88.5 & 15.8 / 95.4 & 42.7 / 90.0 \\
\rowcolor{gray!10} & \textbf{D-KNN} & \textbf{22.6 / 95.2} & \textbf{72.2 / 81.5} & \textbf{31.7 / 92.3} & \textbf{34.1 / 93.0} & \textbf{55.6 / 85.6} & \textbf{46.4 / 88.9} & \textbf{19.0 / 94.4} & \textbf{40.2 / 90.1} \\
\bottomrule
\end{tabular}
}
\caption{\textbf{OOD Detection Results on CIFAR-10 and ImageNet-1K.} We compare the baseline KNN with our D-KNN under Cross-Entropy (CE) and Supervised Contrastive (SupCon) learning frameworks.}
\label{tab:main_results_refined}
\end{table*}

\begin{figure*}
    
    \centering
     \includegraphics[width=1\linewidth]{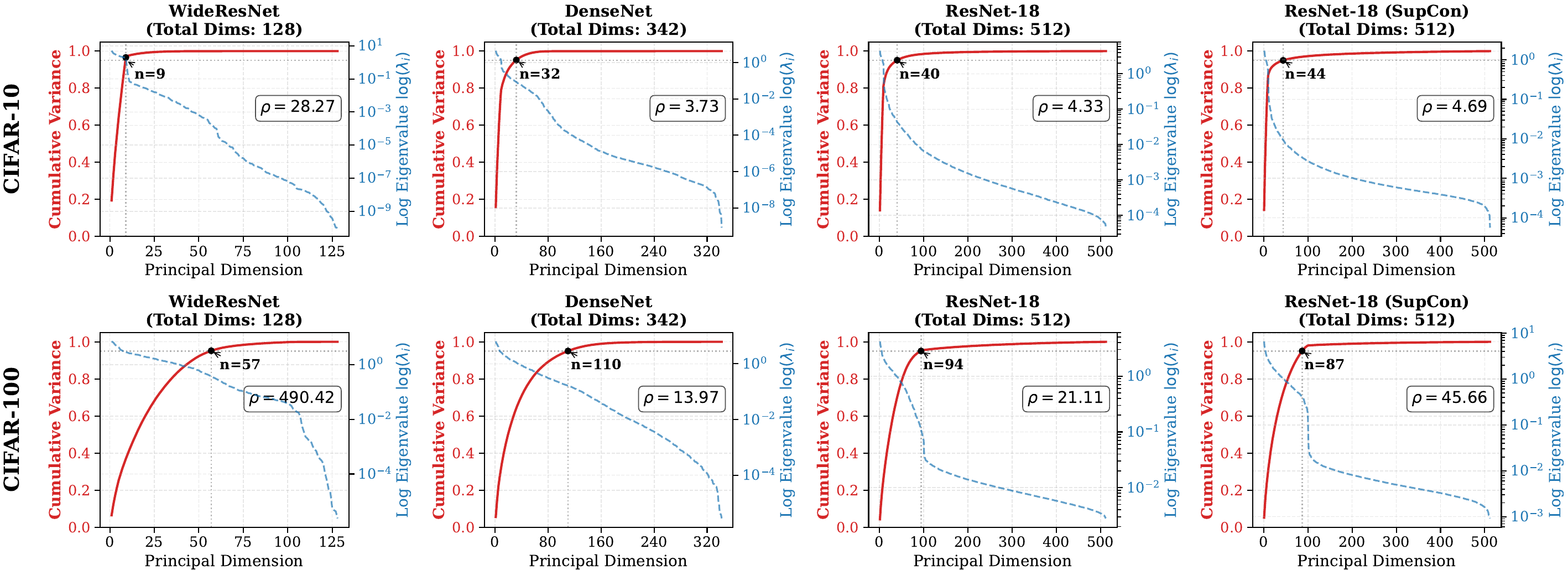}
    \caption{\textbf{Eigenspectrum Analysis and the Semantic Hegemony Ratio $\rho$.} This figure illustrates the exponential decay of eigenvalues across dimensions, revealing the severity of feature energy collapse into the principal subspace under various architectures.}
    \label{fig:shr}

\end{figure*}

\begin{figure}
    
    \centering
     \includegraphics[width=1\linewidth]{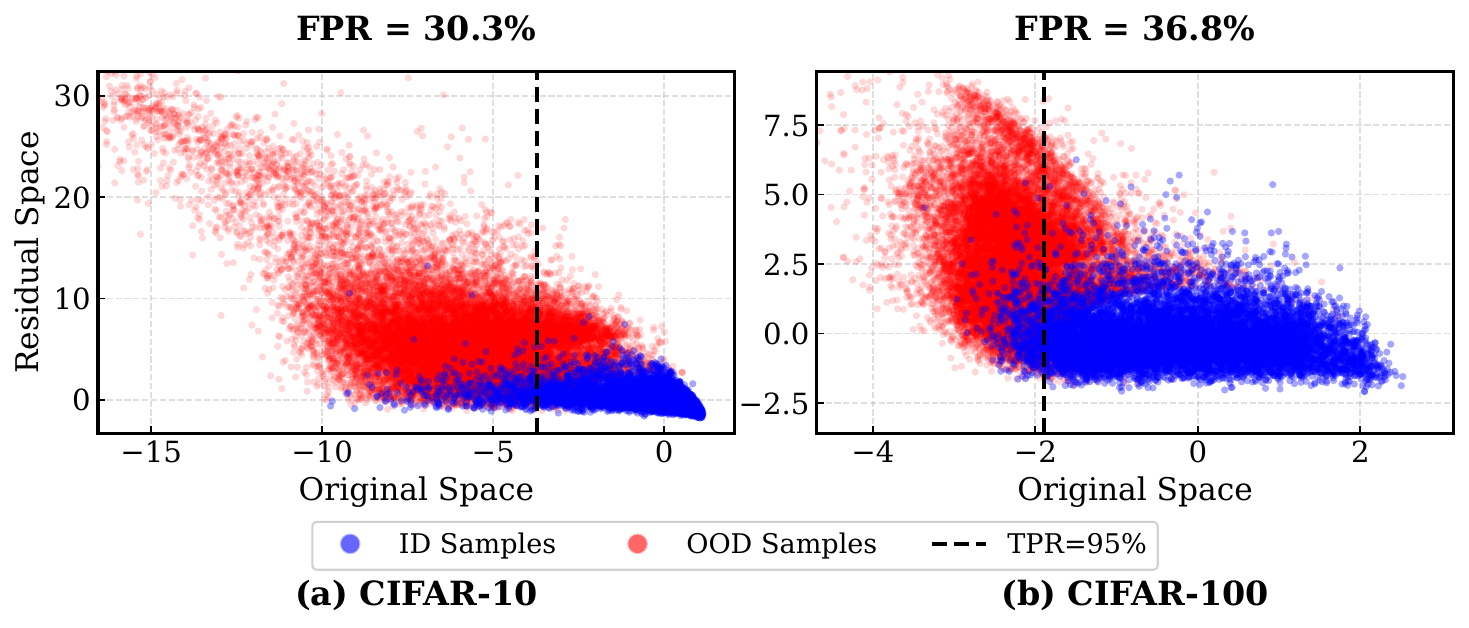}
    \caption{\textbf{Visualization Comparison of OOD Sample Distributions in Original and Residual Spaces.} This scatter plot clearly demonstrates how OOD samples incur semantic overlap within the original space and how geometric decoupling and separation are realized through the residual space.}
    \label{fig:residual}

\end{figure}

\begin{table*}[t]
\centering
\resizebox{\textwidth}{!}{%
\begin{tabular}{l ccccccc c}
\toprule
\multirow{3}{*}{\textbf{Method}} & \multicolumn{7}{c}{\textbf{ Corruptions Datasets}} & \multirow{3}{*}{\textbf{Average}} \\
\cmidrule(lr){2-8}
& Contrast & Frost & Gaussian & Glass Blur & Shot & Speckle & Zoom Blur & \\
& FPR$\downarrow$ / AUC$\uparrow$ & FPR$\downarrow$ / AUC$\uparrow$ & FPR$\downarrow$ / AUC$\uparrow$ & FPR$\downarrow$ / AUC$\uparrow$ & FPR$\downarrow$ / AUC$\uparrow$ & FPR$\downarrow$ / AUC$\uparrow$ & FPR$\downarrow$ / AUC$\uparrow$ & FPR$\downarrow$ / AUC$\uparrow$ \\
\midrule
MSP         & 87.3 / 70.7 & 86.1 / 69.5 & 81.8 / 79.7 & 85.0 / 72.6 & 82.7 / 78.6 & 81.7 / 78.3 & 85.7 / 68.7 & 84.3 / 74.0 \\
Energy      & 91.4 / 70.8 & 85.1 / 72.2 & 81.9 / 77.9 & 86.2 / 71.7 & 79.9 / 78.5 & 78.6 / 79.2 & 85.8 / 71.0 & 84.1 / 74.5 \\
MaxLogit    & 90.1 / 71.2 & 84.9 / 72.3 & 81.7 / 78.6 & 85.5 / 72.1 & 80.4 / 78.9 & 79.0 / 79.6 & 85.5 / 71.1 & 83.9 / 74.8 \\
Mahalanobis & 59.6 / 84.1 & 70.4 / 80.5 & 43.9 / 91.1 & 66.9 / 82.6 & 52.5 / 87.7 & 52.4 / 86.9 & 75.2 / 76.1 & 60.1 / 84.1 \\
ViM         & 74.6 / 69.2 & 87.4 / 63.5 & 53.8 / 84.2 & 89.4 / 60.7 & 53.9 / 83.2 & 56.8 / 81.8 & 92.0 / 59.2 & 72.6 / 71.7 \\
ReAct       & 91.4 / 71.9 & 84.9 / 73.9 & 82.4 / 78.9 & 86.3 / 72.1 & 79.9 / 79.4 & 78.7 / 80.1 & 85.8 / 71.9 & 84.2 / 75.5 \\
Neco        & 90.0 / 71.5 & 84.3 / 72.7 & 79.5 / 79.3 & 85.0 / 72.6 & 78.5 / 79.7 & 76.7 / 80.5 & 85.3 / 71.5 & 82.7 / 75.4 \\
KNN         & 47.0 / 84.9 & 58.2 / 82.6 & 31.5 / 93.4 & 60.0 / 83.5 & 36.4 / 91.8 & 32.6 / 92.2 & 68.5 / 76.6 & 47.7 / 86.4 \\
\midrule
\rowcolor{gray!10} \textbf{D-KNN (Ours)} & \textbf{43.0 / 86.7} & \textbf{58.1 / 83.5} & \textbf{24.0 / 94.9} & \textbf{58.0 / 84.9} & \textbf{30.0 / 93.3} & \textbf{29.4 / 93.2} & \textbf{68.6 / 77.6} & \textbf{44.5 / 87.7} \\
\bottomrule
\end{tabular}
}
\caption{\textbf{OOD Detection Results on CIFAR-100-C.} \textbf{Metric:} FPR95 ($\downarrow$) / AUC ($\uparrow$). All results are percentages (\%) and rounded to one decimal place.}
\label{tab:cifar100c_wrn_7datasets_final}
\end{table*}

\section{Experiments}
\label{sec:experiments}

\subsection{Experimental Setup}

\noindent\textbf{Benchmarks, Models, and Datasets.} 
We conduct comprehensive evaluations on two standard scales. For the CIFAR benchmarks (CIFAR-10 and CIFAR-100~\cite{cifar}), we employ WideResNet-28-10, ResNet-18, and DenseNet-101 as backbones. The standard OOD test suite comprises SVHN~\cite{SVHN}, LSUN (Crop \& Resize)~\cite{LSUN}, iSUN~\cite{iSUN}, and Textures~\cite{Textures}. 
Beyond standard protocols, we introduce two specific evaluation criteria: 
(1) \textbf{Simplicity Paradox:} We explicitly include structurally simple datasets, MNIST~\cite{lecun2002gradient} and EMNIST~\cite{cohen2017emnist}, to test robustness against spectral bias.
(2) \textbf{Sensor Failure Robustness:} To simulate hardware anomalies in real-world deployments, we utilize CIFAR-100-C~\cite{cifar100-c}, specifically selecting high-frequency noise corruptions as OOD datasets.
For the large-scale ImageNet benchmark, we utilize a pre-trained ResNet-50 on ImageNet-1K~\cite{deng2009imagenet}. The OOD evaluation includes massive-scale datasets (iNaturalist~\cite{van2018inaturalist}, Places~\cite{zhou2017places}, SUN~\cite{xiao2010sun}, Textures~\cite{Textures}) and distinctively challenging recent benchmarks (ImageNet-O~\cite{hendrycks2021natural}, NINCO~\cite{bitterwolf2023or}, OpenImage-O~\cite{yang2022openood}) to assess robustness against hard adversarial and open-world shifts. All models are trained using standard Cross-Entropy (CE) loss unless specified otherwise.

\noindent\textbf{Baselines and Metrics.} 
We compare D-KNN against a diverse range of sota OOD detection frameworks: 
(1) \textit{Output-based methods:} MSP, Energy, MaxLogit~\cite{hendrycks2019scaling}, and ReAct~\cite{sun2021react}; 
(2) \textit{Feature-based methods:} Mahalanobis, ViM, KNN, and DICE~\cite{sun2022dice}; 
(3) \textit{Recent advances:} Line~\cite{ahn2023line}, FDBD~\cite{liu2024fast}, NeCo~\cite{yang2023neural}, and WDiscOOD~\cite{chen2023wdiscood}. 
Performance is quantified using the False Positive Rate at 95\% True Positive Rate (\textbf{FPR95}) and the Area Under the Receiver Operating Characteristic curve (\textbf{AUROC}).

\noindent\textbf{Implementation Details.} 
For D-KNN, we extract features from the penultimate layer of pre-trained networks. The dimensionality of the principal subspace, $d$, is determined based on dataset complexity: for CIFAR-10, we align with the theoretical limit of Neural Collapse and set $d = C - 1$ ($C$ is the class count); for CIFAR-100 and ImageNet, where the feature spectrum exhibits a heavier tail, we determine $d$ adaptively to preserve 95\% of the cumulative spectral variance. The nearest neighbor count $k$ is consistent with the standard KNN baseline and we uniformly set $\alpha=0.5$ across all benchmarks. 

\subsection{Experimental Results and Analysis}
\label{sec:main_results}

\noindent\textbf{Main Results on CIFAR Benchmarks.} 
We first evaluate D-KNN on the standard CIFAR benchmarks using WideResNet. Table \ref{tab:combined_results_updated} details the performance against competitive baselines, where D-KNN achieves consistent improvements: on CIFAR-10, it reduces the average FPR95 to \textbf{17.5\%}, significantly outperforming the best baseline (Energy: 23.3\%) and standard KNN (30.3\%), while on CIFAR-100, it achieves an average FPR95 of \textbf{26.7\%}, surpassing KNN (36.8\%) by a large margin of \textbf{10.1\%}. A critical observation from Table \ref{tab:combined_results_updated} is the failure of standard feature-based methods on structurally simple datasets, defined here as the \textit{Simplicity Paradox}; for instance, on CIFAR-100 vs. MNIST, standard KNN yields a high FPR95 of 31.3\%, whereas D-KNN achieves near-perfect separation with an FPR95 of \textbf{2.3\%} (\textbf{1.9\%} on EMNIST). This substantial gain confirms that incorporating the residual subspace effectively corrects the biases inherent in pure semantic distances, capturing distinct structural signatures that remain hidden in the principal subspace due to \textit{Semantic Hegemony}.

\noindent\textbf{Scalability to Large-Scale ImageNet.} 
To verify the scalability of our approach, we evaluate performance on the ImageNet-1K benchmark using ResNet-50, with results summarized in Table \ref{tab:imagenet_resnet50_detailed}. In this challenging, high-dimensional scenario where density-based methods like Mahalanobis struggle (Avg. FPR95 79.7\%) due to precision matrix estimation difficulties, D-KNN achieves the best average performance with an FPR95 of \textbf{54.6\%}, outperforming the strong KNN baseline (57.1\%) and Energy (64.1\%). Notably, on fine-grained OOD datasets like iNaturalist, D-KNN reduces the error rate significantly from 62.6\% to \textbf{47.0\%}, and achieves an impressive \textbf{15.9\%} on Textures. This evidence suggests that the residual subspace contains critical fine-grained structural information that helps distinguish visually similar classes even amidst high-dimensional open-world shifts, effectively elevating hidden anomalies out of the geometric blind spot.

\noindent\textbf{Robustness across Architectures and Mechanism Analysis.} 
We further investigate the benefits of Dual-Space Calibration across different architectures by evaluating DenseNet and ResNet-18 on CIFAR-100, as shown in Table \ref{tab:combined_cifar100_results}. While D-KNN consistently improves upon the KNN baseline regardless of the backbone, we observe that the gain is positively correlated with model capacity: for DenseNet, D-KNN reduces the average FPR95 from 23.4\% to \textbf{17.6\%}, whereas ResNet-18 exhibits much more severe baseline failure (Avg. FPR95 68.5\%) and a more modest recovery to \textbf{62.3\%}. This discrepancy reveals a deep geometric mechanism: high-capacity models produce a \textit{Neural Collapse} state that more closely approaches the theoretical limit, creating an exceptionally clean residual subspace with a high signal-to-noise ratio, while low-capacity models like ResNet-18 yield cluttered distributions where the boundary between semantics and structure remains blurred. Thus, while Semantic Hegemony is an intrinsic byproduct of deep feature representations trained with Cross-Entropy, D-KNN proactively exploits this extreme energy compression to achieve high-precision geometric decoupling.

\subsection{Analysis and Discussion}

\noindent\textbf{Geometric Mechanism: Unveiling the Blind Spot}
To deeply understand the sources of performance gain we investigate the geometric properties of the feature space through spectral analysis and visualization. As shown in Figure \ref{fig:shr} the eigenspectrum of deep features exhibits a rapid decay trend. Under the WideResNet architecture the Semantic Hegemony Ratio $\rho$ reaches 28.27 for CIFAR-10 while it escalates to 490.42 for CIFAR-100. This indicates that driven by the Neural Collapse phenomenon the ratio approaches infinity as the intra-class variance of features tends toward zero causing the vast majority of feature energy to be compressed into an extremely low-dimensional principal subspace. In this scenario the standard metric is dominated by principal components. Considering that ood samples may possess subtle yet crucial structural anomalies in the residual subspace the standard nearest neighbor algorithm will misidentify them as id data because the magnitude of such anomalies is far smaller than the principal space vectors which leads to a geometric blind spot. The visualization results in Figure \ref{fig:residual} confirm that ood data and id data overlap significantly in the original space whereas our method successfully establishes a clear separation boundary by explicitly isolating and amplifying the residual signals.

\begin{figure}
    
    \centering
     \includegraphics[width=1\linewidth]{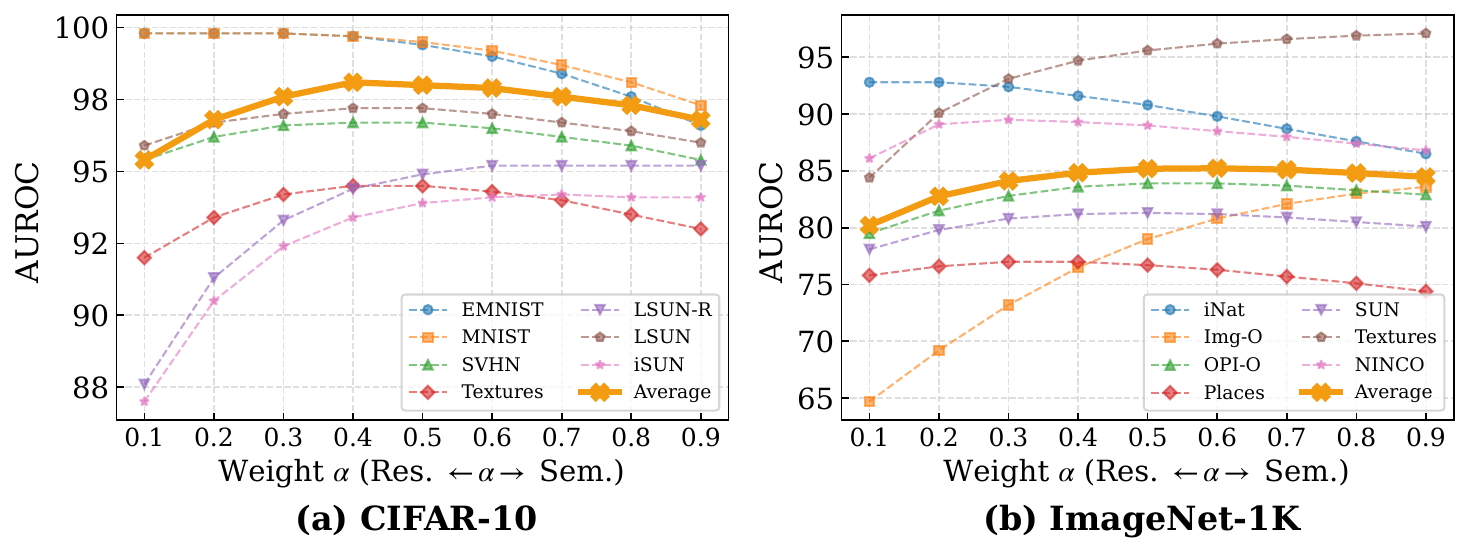}
    \caption{\textbf{Sensitivity Analysis of Detection Performance to the Fusion Weight Parameter $\alpha$ across Different Datasets.} This line chart demonstrates that D-KNN exhibits a stable inverted U-shaped performance distribution across various tasks, reflecting the method's low dependence on rigorous parameter tuning.}
    \label{fig:spectrum}

\end{figure}

\noindent\textbf{ Robustness and Generalization}
We further evaluate the stability under different parameters and training paradigms. Figure \ref{fig:spectrum} analyzes the impact of the weighting parameter on performance which balances the contributions of semantics and residuals. Across multiple datasets the curves exhibit a stable inverted U-shaped distribution. Specifically for CIFAR-10 the performance is most robust in the interval from 0.4 to 0.6. For ImageNet-1K the optimal interval shifts slightly toward higher residual weights primarily concentrated between 0.5 and 0.7 due to the increased complexity of the semantic space. This stability proves that there is no need for tedious fine-tuning for specific datasets. Furthermore we extend the framework to Supervised Contrastive Learning. As shown in Table \ref{tab:main_results_refined} contrastive learning models often exhibit more severe feature collapse than cross-entropy models. Although the baseline itself possesses strong detection capabilities it remains plagued by the simplicity paradox. Our method effectively alleviates this issue further reducing the false positive rate on digit datasets to 3.5\%. This demonstrates that the framework is a universal plugin independent of architecture and training paradigm.

\noindent\textbf{ Application: Sensor Failure Detection}
We simulate real-world hardware anomalies on the corruption dataset to evaluate reliability in safety-critical scenarios. As reported in Table \ref{tab:cifar100c_wrn_7datasets_final} traditional methods perform poorly when facing high-frequency corruptions such as Gaussian noise and shot noise. In contrast our method exhibits stronger robustness across all corruption types with an average false positive rate of 44.5\% and an area under the curve of 87.7\%. Particularly under Gaussian noise we significantly reduce the false positive rate from 31.5\% to 24.0\% and elevate the area under the curve to 94.9\%. Since sensor noise primarily manifests as high-frequency disturbances in the residual subspace the sensitivity of our method to structural residuals makes it an ideal health monitor for visual sensors.

\section{Conclusion}
In this study, we revisit the geometric properties of deep features and identify Semantic Hegemony, induced by Neural Collapse, as the root cause of the Geometric Blind Spot in existing OOD detection methods. This mechanism explains the counter-intuitive Simplicity Paradox, where standard methods fail to detect structurally simple or noisy anomalies.
To address this issue, we propose D-KNN that explicitly decouples the feature space into orthogonal principal and residual views. Through the Dual-Space Calibration mechanism, the numerically suppressed residual signals are successfully revitalized.
Extensive experiments demonstrate that D-KNN not only achieves SOTA performance on standard CIFAR and ImageNet benchmarks but also exhibits remarkable universality and robustness. On the CIFAR-100 benchmark, the method achieves an average FPR95 of 26.7\% and an average AUROC of 93.7\%. The method remains effective across diverse model architectures and training objectives, successfully resolving the failure modes of baseline methods on simple anomalies such as MNIST, where it reduces the FPR95 to 2.3\%. Furthermore, it demonstrates high reliability in sensor failure scenarios, achieving an AUROC of 94.9\% under Gaussian noise. Our findings suggest that in the era of deep representation learning, the often-neglected residual subspace holds critical geometric information essential for the safety and reliability of AI systems.
\bibliographystyle{named}
\bibliography{ijcai26}
\end{document}